\documentclass{ieeetj}
\usepackage{comment}
\usepackage{cite}
\usepackage{amsmath,amssymb,amsthm,amsfonts}
\newtheorem{theorem}{Theorem}
\newtheorem{lemma}{Lemma}

\usepackage{algorithmic}
\usepackage{subcaption}
\usepackage{graphicx}
\usepackage{textcomp}
\usepackage[linesnumbered,ruled,vlined,commentsnumbered]{algorithm2e}
\usepackage{longtable}
\usepackage{lmodern}
\usepackage{float}
\usepackage{booktabs}
\usepackage{tabularx}
\newcolumntype{C}{>{\centering\arraybackslash}X}
\usepackage{hyperref}
\usepackage[normalem]{ulem}
\def\BibTeX{{\rm B\kern-.05em{\sc i\kern-.025em b}\kern-.08em
    T\kern-.1667em\lower.7ex\hbox{E}\kern-.125emX}}
\def\BibTeX{{\rm B\kern-.05em{\sc i\kern-.025em b}\kern-.08em
    T\kern-.1667em\lower.7ex\hbox{E}\kern-.125emX}}
\AtBeginDocument{\definecolor{tmlcncolor}{cmyk}{0.93,0.59,0.15,0.02}\definecolor{NavyBlue}{RGB}{0,86,125}}

\newcommand{\orcid}[1]{%
  \href{https://orcid.org/#1}{\includegraphics[height=5pt]{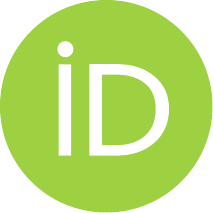}}%
}

\def\OJlogo{\vspace{-4pt}$<$Society logo(s) and publication title will appear here.$>$}
\def\seclogo{\vspace{10pt}$<$Society logo(s) and publication title will appear here.$>$}

\def\authorrefmark#1{\ensuremath{^{\textbf{#1}}}}

\begin{document}
\receiveddate{XX Month, XXXX}
\reviseddate{XX Month, XXXX}
\accepteddate{XX Month, XXXX}
\publisheddate{XX Month, XXXX}
\currentdate{XX Month, XXXX}
\doiinfo{XXXX.2022.1234567}

\markboth{}{Zantalis {et al.}}

\title{FedZMG: Efficient Client-Side Optimization in Federated Learning}

\author{Fotios Zantalis\authorrefmark{1,\orcid{0000-0001-7005-4264}}, Evangelos Zervas\authorrefmark{1,\orcid{0000-0002-7095-5123}},\\ and Grigorios Koulouras\authorrefmark{1,*,\orcid{0000-0002-1697-3670}}(Member, IEEE)}
\affil{TelSiP Research Laboratory, Department of Electrical and Electronic Engineering, School of Engineering, University of West Attica, Ancient Olive Grove Campus, 250 Thivon Str., GR-12241 Athens, Greece}
\corresp{Corresponding author: Grigorios Koulouras (e-mail: gregkoul@uniwa.gr).}
%\authornote{This paragraph of the first footnote will contain support information, including sponsor and financial support acknowledgment. For example, ``This work was supported in part by the U.S. Department of Commerce under Grant 123456.''}

\begin{abstract}
Federated Learning (FL) enables distributed model training on edge devices while preserving data privacy. However, clients tend to have non-Independent and Identically Distributed (non-IID) data, which often leads to client-drift, and therefore diminishing convergence speed and model performance. While adaptive optimizers have been proposed to mitigate these effects, they frequently introduce computational complexity or communication overhead unsuitable for resource-constrained IoT environments. This paper introduces Federated Zero Mean Gradients (FedZMG), a novel, parameter-free, client-side optimization algorithm designed to tackle client-drift by structurally regularizing the optimization space. Advancing the idea of Gradient Centralization, FedZMG projects local gradients onto a zero-mean hyperplane, effectively neutralizing the ``intensity'' or ``bias'' shifts inherent in heterogeneous data distributions without requiring additional communication or hyperparameter tuning. A theoretical analysis is provided, proving that FedZMG reduces the effective gradient variance and guarantees tighter convergence bounds compared to standard FedAvg. Extensive empirical evaluations on EMNIST, CIFAR100, and Shakespeare datasets demonstrate that FedZMG achieves better convergence speed and final validation accuracy compared to the baseline FedAvg and the adaptive optimizer FedAdam, particularly in highly non-IID settings.
\end{abstract}

\begin{IEEEkeywords}
Federated Learning, Distributed Learning, Zero Mean Gradients, client-side optimization, IoT, Cross-Device, Device Heterogeneity, client-drift
\end{IEEEkeywords}

%\IEEEspecialpapernotice{(Invited Paper)}

\maketitle

\section{INTRODUCTION}
\label{sec:introduction}
\PARstart{F}{ederated} Learning (FL) is a Distributed Machine Learning (DML) architecture focused on data privacy preservation \cite{mcmahan2017communication}. In FL, multiple distributed clients train a shared global model under the orchestration of a central server. These clients are often low-powered heterogeneous edge devices. On every training round, each participating client gets a version of the global model and trains it with their local data without sharing them with the rest of the clients or the central server. Instead, clients send the calculated weights back to the central server, where they are aggregated and used to update the global model. While FL offers significant benefits, it also faces a unique set of challenges, such as the high communication cost, and non-Independent and Identically Distributed (non-IID) data handling \cite{zhang2021survey, wen2023survey, li2020federated}. A crucial issue arising from the non-IID data is client-drift \cite{shi2022optimization}. When the data distribution varies significantly on each client, multiple local training steps cause models to drift towards local optima which could be very far away from a global optimum for the shared central model. Therefore, the central model may lose in performance and convergence speed. 
The use of adaptive optimizers has been proposed as a way to tackle client-drift and increase model performance and stability in FL \cite{reddi2020adaptive}. Adaptive optimizers use information from past gradients to adapt the Learning Rate (LR) and speed up or slow down updates for model weights. In FL there are usually two types of optimizers. An optimizer on the server side, where the model weights of individual clients are aggregated, and a client optimizer, used to update each client’ s local model weights. Adaptive optimizers like Adam, Adagrad, Yogi, and AdaDB have demonstrated promising results when selected as server-side optimizers \cite{reddi2020adaptive, zantalis2025data}. However, adaptive optimizers introduce additional hyperparameters that require careful tuning and need to be communicated during training. Therefore, an adaptive optimizer on the client side would significantly increase the communication cost in an FL system. There have also been suggestions for client-side adaptive optimization, where methods like FedCAda, or FedAdap implement adaptive techniques on the client side in order to improve stability, accelerate convergence and even decrease the needed communication rounds \cite{zhou2025fedcada, kundroo2023efficient}. 
While existing client-side adaptive optimization techniques offer significant potential to improve FL convergence and tackle client-drift, they usually introduce computational or communicational overhead that can be critical in real-life Internet of Things (IoT) applications with resource-constrained devices. Achieving convergence acceleration and stability while minimizing the data transmitted in every communication round can be a really challenging task. Therefore, there is an apparent need for communication-efficient, parameter-free client-side optimization techniques that can effectively mitigate heterogeneity without the complexity or overhead associated with stateful adaptive optimizers. In this paper, the algorithm Federated Zero Mean Gradients (FedZMG) is introduced. FedZMG is a novel client-side optimizer that requires minimum hyperparameter tuning and no additional communication overhead, presenting a significant advantage over classic adaptive optimizers like Adam. The core idea of FedZMG is to address non-IID data by constraining the optimization space directly on the client. In many heterogeneous settings, client-drift is observed to be increased by ``intensity'' or ``bias'' shifts in local gradients. Drawing inspiration from the Gradient Centralization (GC) technique originally proposed for centralized deep learning \cite{yong2020gradient}, FedZMG neutralizes these shifts by projecting local gradients onto a zero-mean hyperplane before every update. This mechanism acts as a powerful structural regularizer, forcing the optimizer to focus on the structural features of the data rather than client-specific biases. The entire mechanism takes place exclusively on the client devices with negligible computational cost, and no auxiliary variables are ever communicated to the aggregation server. The main goal of this research is to develop and validate a computationally light and communication-efficient client-side optimization algorithm that mitigates client-drift by design. The key contributions are the following: The proposal of the FedZMG algorithm, a parameter-free client-side optimizer that utilizes zero mean gradient projection to robustly handle non-IID data; the theoretical analysis providing the geometric properties of the ZMG operator and establishing the convergence rate of FedZMG in heterogeneous settings; and an extensive empirical evaluation of FedZMG demonstrating superior performance and stability compared to baseline algorithms such as FedAvg and FedAdam.
The rest of this paper is organized as follows. Section 2 presents related work in FL client-side optimization, adaptive optimizers, and client-drift mitigation. Section 3 provides a detailed description of the FedZMG algorithm. Section 4 presents the theoretical analysis. Section 5 demonstrates the experimental setup, the metrics used, and the results, and Section 5 discusses the empirical results. Finally, Section 6 wraps up the work in a conclusion, highlighting possible limitations, and outlining potential directions for future work. 

\section{BACKGROUND AND RELATED WORK}
\label{sec:background}

A significant amount of research has been conducted in recent years regarding FL optimization. Many researchers attempt to adapt methods and ideas from centralized training, in order to tackle FL unique challenges. The primary goal in FL is to mitigate the effects of data and resource heterogeneity like client-drift, while maintaining communication and computation efficiency. The applied optimization methods can be broadly categorized by whether they mainly modify the server-side, the client-side, or both sides of an FL system.
A straightforward approach is to apply adaptive optimizers on the server side. In such a setup, the clients use a standard Stochastic Gradient Descent (SGD) on their local data, while an adaptive optimizer like Adam, Yogi, or Adagrad is applied on the aggregated signals received from the clients on the server side \cite{reddi2020adaptive}. These methods have the benefit of not introducing any additional communication overhead, since the adaptive optimizer variables remain on the server and there is no need to exchange them with the clients.
FedCAda is an adaptive algorithm that uses an Adam-like optimizer on the client side to accelerate training and maintain stability \cite{zhou2025fedcada}. However, this method requires uploading and downloading Adam’s first and second moment estimates alongside the model weights, increasing therefore the per-round communication overhead.
A significant hurdle that many optimization techniques attempt to overcome is “client-drift”, the divergence of local optimum from the global optimum due to local data heterogeneity. Many client-side optimization methods have been developed to address this issue. SCAFFOLD uses local control variates, which are maintained by the clients and a global control variate, maintained by the server. During training, the local gradient is corrected by subtracting the local variate and adding the global variate \cite{karimireddy2020scaffold}. Other techniques, like Momentum Federated Learning (MFL), take advantage of the momentum functionality using an SGDm optimizer directly on the client-side to accelerate convergence \cite{mills2022client, liu2020accelerating}. However, both SCAFFOLD and MFL increase the communication overhead, as they require clients to send their optimizer states like momentum, or control variates, back to the server, in addition to the trained model weights. Federated Global Biased Optimizer (FedGBO) is using a momentum approach like MFL, but clients download a global momentum value, which remains fixed throughout the local training loop \cite{mills2022accelerating}. This method saves communication bandwidth as there are no additional values uploaded apart from the model weights. The AdaBest algorithm was proposed as a way to address similar trade-offs. AdaBest is an adaptive algorithm that subtracts client-drift estimates at the client side and adds an adaptive, corrective step on the server side after the aggregation. Additionally, the authors of AdaBest state that it reduces the communication overhead by about half compared to SCAFFOLD \cite{varno2022adabest}. FedLion is another adaptive algorithm designed specifically for communication efficiency. This algorithm adapts the Lion optimizer to FL. To minimize communication overhead, Lion transmits a compressed integer-valued vector, and one momentum vector, achieving faster convergence with only a small increase in communication volume compared to FedAvg \cite{tang2024fedlion}. FedAdap is an algorithm that performs adaptive hyper-parameter optimization on the client side during the local training loop \cite{kundroo2023efficient}. During each local epoch, the client’s training loss is monitored, and the client’s LR is dynamically reduced if the loss is not reduced below a specific threshold. Additionally, if the loss is persistently not improving for a specified number of epochs, the local training is interrupted earlier in order to save computational resources on the client. This method does not introduce any additional communication overhead, since only the model weights are sent back to the server.
Beyond algorithmic adaptations, fundamental research into gradient properties has also influenced distributed optimization. The researchers in \cite{yin2018gradient} introduced the concept of ``Gradient Diversity'' to quantify the dissimilarity between concurrent gradient updates, establishing a theoretical link between data heterogeneity and the scalability limits of distributed mini-batch SGD. In the realm of centralized deep learning, the idea of Gradient Centralization is proposed \cite{yong2020gradient}. A structural regularization technique that projects gradients onto a zero-mean hyperplane to improve training stability and generalization performance. These works provide the theoretical foundation for leveraging structural gradient properties to address optimization challenges in heterogeneous environments.

% ============================================================================
% FedZMG Updated Theory: Gradient Disagreement-Based Heterogeneity Detection
% ============================================================================

\section{PROPOSED ALGORITHM: FEDZMG}
\label{sec:proposedAlgorithm}

In this section, the basic FL problem is formulated, and the FedZMG algorithm is presented in detail, by providing a step by step pseudocode and an extended description. Additionally, the notation used throughout the paper is introduced.

\subsection{PROBLEM FORMULATION}
\label{subsec:problem}

FL is a method of solving an optimization problem of the form

\begin{equation}
\min_{\mathbf{w} \in \mathbb{R}^d} F(\mathbf{w}) = \sum_{k=1}^K p_k F_k(\mathbf{w})
\end{equation}

where $K$ is the total number of clients. $F_k(\mathbf{w})$ is the local objective function of the $k$-th client. The goal is to minimize a global objective function $F(\mathbf{w})$, which is a weighted average of the local objective functions. $p_k = \frac{n_k}{\sum_{j=1}^K n_j}$ is the relative weight of client $k$, where $n_j$ is the number of data samples held by client $j$. The local objective $F_k(\mathbf{w})$ is defined as the average error calculated from the client's local data distribution $\mathcal{D}_k$:

\begin{equation}
F_k(\mathbf{w}) = \frac{1}{n_k} \sum_{\xi \in \mathcal{D}_k} \ell(\mathbf{w}; \xi)
\end{equation}

where $\ell(\cdot)$ is the loss function for the data sample $\xi$. non-IID data is assumed, implying $\nabla F_i(\mathbf{w}) \neq \nabla F_j(\mathbf{w})$, which can lead to increased client-drift during local training. This heterogeneity is the primary challenge that FedZMG addresses through structural gradient regularization via zero mean gradients.

\subsection{ALGORITHM DESCRIPTION}
\label{subsec:algorithmdesc}

FedZMG uses a client-side optimizer which incorporates a Zero Mean Gradient (ZMG) operator directly into the local update rule. Unlike methods that require extra hyperparameters or server-side proxies, FedZMG addresses data heterogeneity by constraining the optimization space of the weight vectors.

The principal idea is that non-IID data is often expressed as ``intensity'' or ``bias'' shifts in the gradient vectors. The mean of the gradient vector carries this bias, while the variance carries the structural information \cite{yong2020gradient, ioffe2015batch, qiao2019micro, santurkar2018does}. By enforcing zero-mean gradients (centralization) before applying them, FedZMG forces the optimizer to focus on the structural features of the data, making the local updates more robust to client-specific shifts and reducing drift.

FedZMG algorithm can be described in three phases: I) Initialization, II) Local updates with Zero Mean Gradients, and III) Server Aggregation.

\subsubsection{PHASE I: INITIALIZATION}

At the beginning of every round $r$, the global model $\mathbf{w}^{(r)}$ is broadcast to a subset $K$ of clients $C$ named cohort size. The local parameters $\mathbf{w}_0^k= \mathbf{w}^{(r)}$ are initialized.

\subsubsection{PHASE II: LOCAL UPDATE WITH ZERO MEAN GRADIENTS}

For all local epochs $e \in \{1, \dots, E\}$, every client updates its model using a modified SGD rule that ensures gradients have sum values.

Let $\mathbf{w}_j^k$ denote the model parameters held by the $k$-th client at local step $j$ and
\begin{equation}
\mathbf{g}_j^k(\xi_j^k) = \nabla F_k(\mathbf{w}_j^k, \xi_j^k)
\end{equation}
the sample gradient of the local objective function $F_k(\cdot)$, where $\xi_j^k$ ia a data sample from the current batch. 

\noindent Zero Mean Operator: The ZMG operator is defined as 
\begin{equation}
\Phi_{ZMG}(\mathbf{g}) = \left(I-\frac{1}{d}\mathbf{1}\mathbf{1}^T\right) \mathbf{g} =
\mathbf{g}-\nu_{\mathbf{g}} \mathbf{1}
\end{equation}
where $\mathbf{1}$ is the $d$-dimensional all-ones vector and  
where $\nu_{\mathbf{g}} = \frac{1}{d} \mathbf{1}^T \mathbf{g}$.  In practice, for a weight matrix of a fully connected layer with shape $(C_{in}, C_{out})$, the zero-mean constraint is applied column-wise (over the input dimension $C_{in}$). For convolutional layers, it is applied over the spatial and channel dimensions.

The update rule at local step $j$ is defined as follows:

\begin{align}
\text{ZMG Projection:} \quad & \hat{\mathbf{g}}_j^k(\xi_j^k) = \Phi_{ZMG}(\mathbf{g}_j^k(\xi_j^k)) \\
\text{Weight Decay:} \quad & \hat{\mathbf{w}}_j^k = \mathbf{w}_j^k (1 - \lambda \eta) \\
\text{Update:} \quad & \mathbf{w}_{j+1}^k = \hat{\mathbf{w}}_j^k - \eta \cdot \hat{\mathbf{g}}_j^k(\xi_j^k)
\end{align}

where $\eta$ is the LR and $\lambda$ is the weight decay coefficient. Note that the weight decay is decoupled and applied directly to the weights, while the update is performed using the \textit{zero-mean} gradients.

Theoretical Justification: By removing the mean component of the gradient, FedZMG regularizes the weight space and reduces the Lipschitz constant of the loss function, leading to a smoother optimization landscape. In the context of FL, this makes local updates invariant to client-specific mean-shifts in the data distribution, thereby aligning the optimization trajectories of heterogeneous clients without reducing their LR.

\subsubsection{PHASE III: SERVER AGGREGATION}

The model parameters of the $k$-th client after updating them for $E$ epochs at round $r$ are denoted by $\mathbf{w}_k^{(r,E)}$. The clients transmit their updated models to the server  $ \mathbf{w}_k^{(r+1)}\gets\mathbf{w}_{k}^{(r,E)}$ and the server aggregates them using a standard weighted average similar to FedAvg:

\begin{equation}
\mathbf{w}^{(r+1)} = \sum_{k \in C} p_k \mathbf{w}_k^{(r+1)}
\end{equation}

FedZMG introduces no communication overhead compared to FedAvg, as the zero-mean operation is strictly local and no auxiliary variables are transmitted.

FedZMG steps are described in detail in Algorithm \ref{algo:FedZMG}.

\begin{algorithm}
\caption{FedZMG Pseudocode}
\label{algo:FedZMG}
\SetAlgoLined
\DontPrintSemicolon
\KwIn{$w^{(0)}$, $E$, $\eta$, $\lambda$}
\KwOut{Global model $w^{(T)}$}

\ForEach{round $r \in \{1, 2, \dots, T\}$}{
  $C \gets \text{SampleClients}(K)$\;
  \ForEach{client $k \in C$ \textbf{in parallel}}{
    \tcc{Initialize local model on client $k$}
    $w_0^k \gets w^{(t)}$\;
    
    \ForEach{epoch $e \in \{1, 2, \dots, E\}$}{
      \ForEach{batch $\mathcal{D}_{e,m}^k, m=1,\ldots |\mathcal{D}_e^k|$, and each   $\xi_j^k \in \mathcal{D}_{e,m}^k$}{
        $g_j^k \gets \nabla F_k(w_j^k; \xi_j^k)$\;
        
        \tcc{Apply Zero Mean Gradient Projection}
        \If{$g_j^k$ is weight matrix}{
            $\nu_{g} \gets \text{mean}(g_j^k, \text{axis}=\text{inputs})$\;
            $\hat{g}_j^k \gets g_j^k - \nu_{g}$\;
        }
        \Else{
            $\hat{g}_j^k \gets g_j^k$\;
        }
        
        \tcc{Apply decoupled weight decay and ZMG update}
        $w_{j+1}^k \gets w_j^k (1 - \lambda \eta) - \eta \cdot \hat{g}_j^k$\;
      }
    }
    $w_{k}^{(r+1)} \gets w_{k}^{(r,E)}$\;
  }
  $w^{(r+1)} \gets \sum_{k \in C} p_k w_{k}^{(r+1)}$\;
}
\Return $w^{(T)}$\;
\end{algorithm}

\section{THEORETICAL ANALYSIS}
\label{sec:theoryticalAnalis}

In this section, a theoretical analysis of FedZMG is presented. The basic geometric properties of the ZMG projection operator are presented and the convergence of FedZMG is shown under standard assumptions used in the literature. The results demonstrate that the algorithm mitigates the adverse effects of data heterogeneity by reducing the effective variance and gradient dissimilarity constants.

\subsection{GEOMETRIC PROPERTIES OF THE ZMG OPERATOR}
\label{subsec:geometricproperties}

Recall from the previous section, that if $\mathbf{g} \in \mathbb{R}^d$ is a gradient vector, the ZMG operator $\Phi_{ZMG}: \mathbb{R}^d \to \mathbb{R}^d$ projects it onto the hyperplane orthogonal to the all-ones vector $\mathbf{1}$:
\begin{eqnarray}
\Phi_{ZMG}(\mathbf{g}) &=& \mathbf{g} - \nu_{\mathbf{g}} \mathbf{1} = \mathbf{g} - \left( \frac{1}{d} \mathbf{g}^\top \mathbf{1} \right) \mathbf{1} \nonumber \\
&=& \left( I-\frac{1}{d}\mathbf{1}\mathbf{1}^T\right) \mathbf{g} =
\Phi \cdot \mathbf{g} 
\end{eqnarray}
where $\nu_{\mathbf{g}}$ is the scalar mean of the vector elements 
and $\Phi$ the corresponding projection matrix. 
It is clear that the matrix $\Phi$ is symmetric, $\Phi^T=\Phi$ and idempotent $\Phi^2 = \Phi$. Moreover, the the operator $\Phi_{ZMG}$ is non-expansive. That is, for any vector $\mathbf{x}$, $\|\Phi_{ZMG}(\mathbf{x})\| \le \|\mathbf{x}\|$. Furthermore, for any two heterogeneous gradients $\mathbf{g}_i$ and $\mathbf{g}_j$ with different mean intensities (biases), the distance between them is strictly reduced in the projected space:
\begin{equation}
\| \Phi_{ZMG}(\mathbf{g}_i) - \Phi_{ZMG}(\mathbf{g}_j) \|^2 = \| \mathbf{g}_i - \mathbf{g}_j \|^2 - d (\nu_i - \nu_j)^2
\end{equation}
where $\nu_i$ and $\nu_j$ are the scalar means of $\mathbf{g}_i$ and $\mathbf{g}_j$.

To see this, decompose the difference vector $\Delta = \mathbf{g}_i - \mathbf{g}_j$ into a mean component $\Delta_{\nu} = (\nu_i - \nu_j)\mathbf{1}$ and a zero-mean component $\Delta_{zmg} = \Phi_{ZMG}(\mathbf{g}_i) - \Phi_{ZMG}(\mathbf{g}_j)$.
By construction, $\Delta_{zmg} \perp \mathbf{1}$, and thus $\Delta_{zmg} \perp \Delta_{\nu}$.
By the Pythagorean theorem:
$$ \| \mathbf{g}_i - \mathbf{g}_j \|^2 = \| \Delta_{zmg} + \Delta_{\nu} \|^2 = \| \Delta_{zmg} \|^2 + \| \Delta_{\nu} \|^2 $$
Noting that $\| \Delta_{\nu} \|^2 = \| (\nu_i - \nu_j)\mathbf{1} \|^2 = d(\nu_i - \nu_j)^2$, rearranging terms yields the result.

\subsection{CONVERGENCE ANALYSIS OF FEDZMG}
\label{subsec:convergenceanalysis}
In this section the convergence analysis for FedZMG is presented under the Non-IID, setting using full device participation to reduce complexity.  The analysis
parallels to some extent the one provided by \cite{li2019convergence} for the FedAvg algorithm. To facilitate reading the same notation is used and the differences between the two algorithms are highlighted through similar lemmas. 

\subsubsection{NOTATION AND ALGORITHM}
Let $\mathbf{w}_t^k$ denote the model parameters held by the $k$-th client at time $t$ and  $\mathbf{g}_t^k(\xi_t^k)=\nabla F_k(\mathbf{w}_t^k,\xi_t^k)$ a sample gradient of the local objective function $F_k$. Moreover, $\mathcal{I}_E=\{nE | n=1,2,\ldots\}$ stands for the set of synchronization time instants.  Then, the model update equations of FedZMG can be described in compact form as 
\begin{eqnarray}
\mathbf{v}_{t+1}^k &=& \mathbf{w}_t^k-n_t \mathbf{q}_t^k(\xi_t^k)   \\
\mathbf{w}_{t+1}^k &=& \left\{ \begin{array}{ll}
\mathbf{v}_{t+1}^k & t+1 \notin \mathcal{I}_E \\
\sum_k p_k \mathbf{v}_{t+1}^k & t+1 \in \mathcal{I}_E \end{array} \right. 
\end{eqnarray}
where $\mathbf{q}_t^k(\xi_t^k)=\mathbf{g}_t^k(\xi_t^k)-\frac{1}{d}\langle \nabla F_k(\mathbf{w}_t^k, \xi_t^k),\mathbf{1}\rangle$
The temporary variable $\mathbf{v}_{t+1}^k$ is introduced to represent the direct SGD update from the current model parameters, whereas the form of $\mathbf{w}_{t+1}^k$ depends on whether $t+1 \in \mathcal{I}_E$ or not. Additionally to the 
\[
\mathbf{g}_t =  \sum_k p_k \mathbf{g}_t^k(\xi_t^k), \quad  \bar{\mathbf{g}}_t  =  \sum_k p_k \nabla F_k(\mathbf{w}_t^k)  = \mathbb{E}[\mathbf{g}_t]
\]
and the virtual sequences 
\[
\bar{\mathbf{w}}_t=\sum_k p_k \mathbf{w}_t^k, \quad 
\bar{\mathbf{v}}_t=\sum_k p_k \mathbf{v}_t^k
\]
the following are defined for convenience
\[
\mathbf{q}_t =  \sum_k p_k \mathbf{q}_t^k(\xi_t^k), \quad  \bar{\mathbf{q}}_t  =  \sum_k p_k \mathbf{q}_t^k = \sum_k p_k \mathbb{E} [\mathbf{q}_t^k(\xi_t^k)]
\]
From the above definitions it is clear that $\bar{\mathbf{w}}_{t+1}=\bar{\mathbf{v}}_{t+1}$ and 
$\bar{\mathbf{v}}_{t+1}=\bar{\mathbf{w}}_{t}-n_t \mathbf{q}_t$.
\subsubsection{ASSUMPTIONS}
The following assumptions are made throughout the analysis. 
\begin{itemize}
    \item {\bf Assumption 1.} All the objective functions $F_1, F_2, \ldots, F_K$ are L-smooth, that is 
    \[
    \| \nabla F_k(\mathbf{x}) -\nabla F_k(\mathbf{y})\|_{\ast} \le L \| \mathbf{x} -\mathbf{y}\|
    \]
    where $\| \cdot \|_{\ast}$ denotes the dual norm. Working with the self dual Euclidean norm it can be shown that an immediate consequence of L-smoothness is that 
    \[
    F_k(\mathbf{y}) \le F_k (\mathbf{x})+ \langle \nabla F_k(\mathbf{x}), \mathbf{y}-\mathbf{x}\rangle +\frac{L}{2} \| \mathbf{y}-\mathbf{x}\|^2
    \] 
    Setting $\mathbf{y}=\mathbf{x}-\frac{1}{2L}\nabla F_k(\mathbf{x}) $ in the previous inequality, the following is obtained
    \[ 
    F_k(\mathbf{y})< F_k(\mathbf{x})-\frac{1}{2L} \|\nabla F_k(\mathbf{x})\|^2 
    \] 
    that is, using the gradient method with step size $\frac{1}{2L}$ decreases the function value. 
    \item {\bf Assumption 2. } All the objective functions $F_1, F_2, \ldots, F_K$ are $\mu$-strongly convex, that is
    \[
    F_k(\mathbf{y}) \ge F_k (\mathbf{x})+ \langle \nabla F_k(\mathbf{x}), \mathbf{y}-\mathbf{x}\rangle +\frac{\mu}{2} \| \mathbf{y}-\mathbf{x}\|^2
    \]
    \item{\bf Assumption 3.} The variance of stochastic gradients in each client is bounded
    \[ 
    \mathbb{E}\| \nabla F_k(\mathbf{w}_t^k, \xi_t^k) -\nabla F_k(\mathbf{w}_t^k) \|^2  \le \sigma_k^2
    \]
    \item{\bf Assumption 4.} The expected squared norm of stochastic gradients is uniformly bounded, i.e., 
    \[ 
    \mathbb{E}\| \nabla F_k(\mathbf{w}_t^k, \xi_t^k) \|^2  \le G^2
    \]
    for all $k=1,\ldots, K$ and time instants $t$. 
\end{itemize}

\subsubsection{KEY LEMMAS AND PROOFS}
The following key lemmas  replace the corresponding ones in \cite{li2019convergence}. 
\begin{lemma}[One-Step Progress]
\label{lem:onestep}
If $n_t\le \frac{1}{4L}$ and $\mathbf{1}^T(\bar{\mathbf{w}}_0-\mathbf{w}^{\ast})=0$ then
\begin{eqnarray}
   \lefteqn{  \mathbb{E}[\|\bar{\mathbf{w}}_{t+1}-\mathbf{w}^{\ast}\|^2] \le
   (1-n_t \mu) \mathbb{E}[\|\bar{\mathbf{w}}_{t}-\mathbf{w}^{\ast}\|^2]} \nonumber \\  
    &&  
    +\mathbb{E}\left[2 \sum_k p_k \| \bar{\mathbf{w}}_{t}-\mathbf{w}_t^k\|^2 \right] +n_t^2\mathbb{E}[\|\mathbf{q}_{t}-\bar{\mathbf{q}}_t\|^2] \nonumber \\
    &&+ n_t^2 6L\Gamma   -
    n_t^2 \mathbb{E}\left[ \sum_k p_k \frac{1}{d} (\mathbf{1}^T \nabla F_k(\mathbf{w}_t^k))^2 \right]
\end{eqnarray}
where $\Gamma=F^{\ast}-\sum_k p_k F_k^{\ast}$
\end{lemma}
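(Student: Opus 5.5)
The plan is to follow the proof of the one-step lemma of \cite{li2019convergence} line by line and change two places. First, the projection $\Phi$ can be moved off the gradients inside the inner products. Second, the Pythagorean identity of Section~\ref{subsec:geometricproperties} applied to $\|\Phi\nabla F_k\|^2$ produces the extra negative term. Note that $\mathbf{q}_t^k(\xi_t^k)=\Phi\,\mathbf{g}_t^k(\xi_t^k)$, so $\mathbb{E}[\mathbf{q}_t^k(\xi_t^k)]=\Phi\nabla F_k(\mathbf{w}_t^k)$ and $\bar{\mathbf{q}}_t=\Phi\bar{\mathbf{g}}_t$.

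\textbf{Invariant.} I would first show that every iterate stays in the affine set $\mathbf{w}^\ast+\mathbf{1}^\perp$. Each local update subtracts a vector in the range of $\Phi$, and averaging preserves membership in an affine set. The hypothesis $\mathbf{1}^T(\bar{\mathbf{w}}_0-\mathbf{w}^\ast)=0$ starts the induction, since all clients are initialized at $\bar{\mathbf{w}}_0$. Hence $\bar{\mathbf{w}}_t-\mathbf{w}^\ast$, $\mathbf{w}_t^k-\mathbf{w}^\ast$ and $\bar{\mathbf{w}}_t-\mathbf{w}_t^k$ all lie in $\mathbf{1}^\perp$. Because $\Phi$ is symmetric and fixes these vectors, $\langle \Phi\nabla F_k(\mathbf{w}_t^k),\mathbf{x}\rangle=\langle\nabla F_k(\mathbf{w}_t^k),\mathbf{x}\rangle$ for each such difference $\mathbf{x}$.

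\textbf{Splitting the error.} Using $\bar{\mathbf{w}}_{t+1}=\bar{\mathbf{w}}_t-n_t\mathbf{q}_t$, I would write
\[
\|\bar{\mathbf{w}}_{t+1}-\mathbf{w}^\ast\|^2=\|\bar{\mathbf{w}}_t-n_t\bar{\mathbf{q}}_t-\mathbf{w}^\ast\|^2+n_t^2\|\mathbf{q}_t-\bar{\mathbf{q}}_t\|^2+\text{cross term}.
\]
The cross term has zero expectation, which gives the variance term $n_t^2\mathbb{E}\|\mathbf{q}_t-\bar{\mathbf{q}}_t\|^2$ of the statement. The deterministic part expands into
\[
\|\bar{\mathbf{w}}_t-\mathbf{w}^\ast\|^2 + A_1 + A_2,\qquad A_1=-2n_t\langle\bar{\mathbf{w}}_t-\mathbf{w}^\ast,\bar{\mathbf{q}}_t\rangle,\qquad A_2=n_t^2\|\bar{\mathbf{q}}_t\|^2 .
\]
By the invariant, $A_1=-2n_t\sum_k p_k\langle\bar{\mathbf{w}}_t-\mathbf{w}_t^k+\mathbf{w}_t^k-\mathbf{w}^\ast,\nabla F_k(\mathbf{w}_t^k)\rangle$, which is exactly the expression in \cite{li2019convergence}. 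It is then bounded as there: Cauchy--Schwarz and AM--GM on the first piece, and $\mu$-strong convexity on the second.

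For $A_2$, convexity of $\|\cdot\|^2$ and the identity $\|\Phi\mathbf{g}\|^2=\|\mathbf{g}\|^2-\frac1d(\mathbf{1}^T\mathbf{g})^2$ (the orthogonal decomposition of Section~\ref{subsec:geometricproperties}) give
\[
A_2\le n_t^2\sum_k p_k\|\nabla F_k(\mathbf{w}_t^k)\|^2-n_t^2\sum_k p_k\tfrac1d\big(\mathbf{1}^T\nabla F_k(\mathbf{w}_t^k)\big)^2 .
\]
The first sum is bounded by $2L\sum_k p_k\big(F_k(\mathbf{w}_t^k)-F_k^\ast\big)$ using $L$-smoothness (Assumption~1). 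The second sum is carried unchanged to the final bound.

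\textbf{Main obstacle.} The hardest step is the recombination. With $\gamma_t=2n_t(1-2Ln_t)$, one must bound $n_t^2 2L\sum_k p_k(F_k-F_k^\ast)-\gamma_t\sum_k p_k(F_k(\mathbf{w}_t^k)-F^\ast)$ by $2\sum_k p_k\|\bar{\mathbf{w}}_t-\mathbf{w}_t^k\|^2+6Ln_t^2\Gamma$. I would follow the argument of \cite{li2019convergence} for this:
\begin{itemize}
\item insert $\pm F(\bar{\mathbf{w}}_t)$;
\item use convexity together with smoothness in the form $\langle\nabla F_k(\bar{\mathbf{w}}_t),\mathbf{w}_t^k-\bar{\mathbf{w}}_t\rangle\ge-\frac{n_t}{2}\|\nabla F_k(\bar{\mathbf{w}}_t)\|^2-\frac{1}{2n_t}\|\mathbf{w}_t^k-\bar{\mathbf{w}}_t\|^2$;
\item use $n_t\le\frac1{4L}$ to drop the nonpositive term $F(\bar{\mathbf{w}}_t)-F^\ast\ge0$.
\end{itemize}
The extra term $-\frac{n_t^2}{d}\sum_k p_k(\mathbf{1}^T\nabla F_k)^2$ is nonpositive and does not interact with this absorption, so it survives verbatim. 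The remaining point to check is that the invariant makes every inner product with an unprojected gradient legitimate. Taking expectations and collecting terms then gives the statement of Lemma~\ref{lem:onestep}.
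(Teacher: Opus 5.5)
Your proposal is correct and follows essentially the same route as the paper: you replay the one-step argument of \cite{li2019convergence} with $\mathbf{q}_t,\bar{\mathbf{q}}_t$ in place of $\mathbf{g}_t,\bar{\mathbf{g}}_t$, use the initialization condition (preserved because every local step is zero-sum) to kill the $\mathcal{O}(n_t)$ mean term $n_t\sum_k p_k\langle\bar{\mathbf{w}}_t-\mathbf{w}^\ast,\frac1d\mathbf{1}\mathbf{1}^T\nabla F_k(\mathbf{w}_t^k)\rangle$, and keep the extra $-\frac{n_t^2}{d}\sum_k p_k(\mathbf{1}^T\nabla F_k(\mathbf{w}_t^k))^2$ that $\|\Phi\nabla F_k\|^2$ produces inside $n_t^2\|\bar{\mathbf{q}}_t\|^2$. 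The only slip is in your recombination bookkeeping: the $\|\nabla F_k(\mathbf{w}_t^k)\|^2$ terms from $A_2$ and from the AM--GM step in $A_1$ together contribute $4Ln_t^2\sum_k p_k(F_k(\mathbf{w}_t^k)-F_k^\ast)$, so the quantity to absorb is $-\gamma_t\sum_k p_k(F_k(\mathbf{w}_t^k)-F^\ast)+4Ln_t^2\Gamma$; \cite{li2019convergence} bounds this by $\sum_k p_k\|\bar{\mathbf{w}}_t-\mathbf{w}_t^k\|^2+6Ln_t^2\Gamma$, and the second copy of the divergence term comes from AM--GM.
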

\begin{proof}
The proof follows the same steps as in \cite{li2019convergence} with sequences 
$\mathbf{g}_t$ and $\bar{\mathbf{g}}_t$ replaced by $\mathbf{q}_t$ and $\bar{\mathbf{q}}_t$ respectively. However, in the process of proof the term 
$n_t \sum_k p_k \langle \bar{\mathbf{w}}_t -\mathbf{w}^{\ast}, \frac{1}{d}\mathbf{1}\mathbf{1}^T \nabla F_k (\mathbf{w}_t^k) \rangle$ appears. 
This $\mathcal{O}(n_t)$ term will force convergence away from zero. To remedy this problem, an additional assumption is imposed on the algorithm initialization process so that the term is completely vanished. Notice that  the inner product of the error vector $\bar{\mathbf{w}}_t-\mathbf{w}^{\ast}$ and a multiple of the all ones vector $\mathbf{1}$ is eliminated if the error vector lies in the space orthogonal to $\mathbf{1}$, Therefore, the algorithm is initialized with a model parameter vector $\bar{\mathbf{w}}_0$ such that $\mathbf{1}^T\bar{\mathbf{w}}_0-\mathbf{1}^T\mathbf{w}^{\ast}$. Since $\mathbf{w}_t^k$ is updated  using zero-sum vectors this condition will hold for every $\bar{\mathbf{w}}_t$. Moreover, since structural information is not available for the gradients, the last non negative term is simply ignored hereafter in Lemma 1. 
\end{proof}
\begin{lemma}[Bounding variance]
\begin{equation}
\mathbb{E}[\| \mathbf{q}_t - \bar{\mathbf{q}}_t\|^2] \le \sum_k p_k^2 
\left(\sigma_k^2 - \frac{1}{d} \mathbf{1}^T \Sigma_g^k \mathbf{1}\right)
\end{equation}
\end{lemma}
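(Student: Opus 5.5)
The bound follows from three facts: $\mathbf{q}_t-\bar{\mathbf{q}}_t$ is the projection $\Phi$ applied to the aggregated stochastic gradient noise; the client noises are independent and mean zero; and the projection removes exactly the noise energy along $\mathbf{1}$. Here $\Sigma_g^k$ is read as the covariance matrix of the stochastic gradient $\mathbf{g}_t^k(\xi_t^k)$ of client $k$ at $\mathbf{w}_t^k$, so that $\operatorname{tr}(\Sigma_g^k)=\mathbb{E}\|\mathbf{g}_t^k(\xi_t^k)-\nabla F_k(\mathbf{w}_t^k)\|^2\le\sigma_k^2$ by Assumption 3.

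First, write the projected gradient through the matrix of Section~\ref{subsec:geometricproperties}: $\mathbf{q}_t^k(\xi_t^k)=\Phi\,\mathbf{g}_t^k(\xi_t^k)$. The per-client mean is then $\mathbb{E}[\mathbf{q}_t^k(\xi_t^k)]=\Phi\nabla F_k(\mathbf{w}_t^k)$, because $\Phi$ is a fixed matrix and expectation passes through it. Setting $\boldsymbol{\epsilon}_k=\mathbf{g}_t^k(\xi_t^k)-\nabla F_k(\mathbf{w}_t^k)$, we get
\[
\mathbf{q}_t-\bar{\mathbf{q}}_t=\Phi\sum_k p_k\boldsymbol{\epsilon}_k .
\]

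Second, expand $\mathbb{E}\|\Phi\sum_k p_k\boldsymbol{\epsilon}_k\|^2$. The samples $\xi_t^k$ are drawn independently across clients, conditionally on the current iterates, and each $\boldsymbol{\epsilon}_k$ has conditional mean zero. Hence the cross terms $\mathbb{E}\langle\Phi\boldsymbol{\epsilon}_k,\Phi\boldsymbol{\epsilon}_j\rangle$ vanish for $k\ne j$, and the expression reduces to $\sum_k p_k^2\,\mathbb{E}\|\Phi\boldsymbol{\epsilon}_k\|^2$. This is the same step as the corresponding variance lemma in \cite{li2019convergence}.

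Third, evaluate each summand exactly. By the Pythagorean decomposition used for the distance identity in Section~\ref{subsec:geometricproperties}, with $\Phi$ symmetric and idempotent,
\[
\|\Phi\boldsymbol{\epsilon}_k\|^2=\|\boldsymbol{\epsilon}_k\|^2-\tfrac{1}{d}(\mathbf{1}^T\boldsymbol{\epsilon}_k)^2 .
\]
Take expectations. The first term is bounded by $\sigma_k^2$ via Assumption 3. For the second, $\mathbb{E}(\mathbf{1}^T\boldsymbol{\epsilon}_k)^2=\mathbf{1}^T\mathbb{E}[\boldsymbol{\epsilon}_k\boldsymbol{\epsilon}_k^T]\mathbf{1}=\mathbf{1}^T\Sigma_g^k\mathbf{1}$. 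Equivalently, $\mathbb{E}\|\Phi\boldsymbol{\epsilon}_k\|^2=\operatorname{tr}(\Phi\Sigma_g^k)=\operatorname{tr}(\Sigma_g^k)-\frac1d\mathbf{1}^T\Sigma_g^k\mathbf{1}$. Summing with weights $p_k^2$ gives the stated bound.

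The main obstacle is interpretive rather than technical. The subtracted term is only meaningful if $\Sigma_g^k$ is the noise covariance at the current iterate. If it is instead an iterate-independent bound, the inequality needs $\frac1d\mathbf{1}^T\Sigma_g^k\mathbf{1}$ to be a \emph{lower} bound on the noise energy along $\mathbf{1}$, uniformly in $t$. Without that, subtracting it from the upper bound $\sigma_k^2$ is unjustified. I would therefore state this interpretation explicitly: either the inequality holds conditionally at time $t$, or one assumes a uniform lower bound on the noise variance along $\mathbf{1}$. I would also note that the bound is never worse than $\sum_k p_k^2\sigma_k^2$, since $\mathbf{1}^T\Sigma_g^k\mathbf{1}\ge0$.
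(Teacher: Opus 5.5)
Your proposal is correct and follows the paper's proof essentially step for step. Like the paper, you write $\mathbf{q}_t^k(\xi_t^k)=\Phi\mathbf{g}_t^k(\xi_t^k)$, reduce $\mathbb{E}\|\mathbf{q}_t-\bar{\mathbf{q}}_t\|^2$ to $\sum_k p_k^2\,\mathbb{E}\|\Phi(\mathbf{g}_t^k(\xi_t^k)-\nabla F_k(\mathbf{w}_t^k))\|^2$ via independence across clients, and evaluate each term as $\mathrm{tr}(\Phi\Sigma_g^k)=\mathrm{tr}(\Sigma_g^k)-\frac{1}{d}\mathbf{1}^T\Sigma_g^k\mathbf{1}$. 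Yours is in fact slightly more careful: the paper simply equates $\mathrm{tr}(\Sigma_g^k)$ with $\sigma_k^2$ and never addresses the conditional-versus-uniform reading of $\Sigma_g^k$ that you rightly flag.
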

\begin{proof}
 Clearly, 
\begin{eqnarray}
\mathbb{E}[\| \mathbf{q}_t - \bar{\mathbf{q}}_t\|^2] &\le &  \mathbb{E}\left[ \sum_k p_k^2 \| \mathbf{q}_t^k - \mathbf{q}_t^k(\xi_t^k) \|^2 \right] \nonumber \\
&=& 
\sum_k p_k^2 \mathbb{E}[\| \mathbf{q}_t^k - \mathbf{q}_t^k(\xi_t^k) \|^2] \nonumber 
\end{eqnarray}
However, $\mathbf{q}_t^k(\xi_t^k)=\Phi \mathbf{g}_t^k(\xi_t^k) $
and $\mathbf{q}_t^k=\Phi \mathbf{g}_t^k $
where $\Phi$ is the projecting matrix. Thus, 
\begin{eqnarray}
\lefteqn{
\mathbb{E}[\| \mathbf{q}_t^k - \mathbf{q}_t^k(\xi_t^k) \|^2]= 
\mathbb{E}[\| \Phi(\mathbf{g}_t^k(\xi_t^k)-\mathbf{g}_t^k)\|^2 ] } \nonumber \\
&=&tr(\Phi \mathbb{E}\left[(\mathbf{g}_t^k(\xi_t^k)-\mathbf{g}_t^k)(\mathbf{g}_t^k(\xi_t^k)-\mathbf{g}_t^k)^T \Phi^T \right] \nonumber \\
&=& tr(\Phi \Sigma_g^k \Phi^T)=tr(\Phi^2 \Sigma_g^k)=tr(\Phi\Sigma_g^k) \nonumber \\
&=& tr((I-\frac{1}{d} \mathbf{1}\mathbf{1}^T)\Sigma_g^k) =
 tr(\Sigma_g^k)-\frac{1}{d} tr(\mathbf{1}\mathbf{1}^T\Sigma_g^k) \nonumber \\
 &=& 
\sigma_k^2 -\frac{1}{d} \mathbf{1}^T\Sigma_g^k \mathbf{1}
\end{eqnarray}
where $\Sigma_g^k=Cov(\mathbf{g}_t^k(\xi_t^k))$. 
\end{proof}
%------------
\begin{lemma}[Bounding divergence]
Using Assumption 4. and suitably chosen non-increasing $n_t$ such that $n_t \le 2n_{t+E}$ 
\begin{equation}
\mathbb{E}\left[\| \sum_k p_k \| \mathbf{w}_t^k -\bar{\mathbf{w}}_t \|^2 \right]\le 
4n_t^2 (E-1)^2 G^2
\end{equation}
\end{lemma}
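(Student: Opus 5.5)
Since the claim is the analogue of Lemma 3 in \cite{li2019convergence}, we follow that argument and replace the stochastic gradients $\mathbf{g}_t^k(\xi_t^k)$ by their projections $\mathbf{q}_t^k(\xi_t^k)=\Phi\,\mathbf{g}_t^k(\xi_t^k)$. The only change we need is a bound on the second moment of the projected gradients. By the non-expansiveness of $\Phi_{ZMG}$ established in the subsection on geometric properties, $\|\Phi\mathbf{x}\|\le\|\mathbf{x}\|$. Assumption 4 then gives
\[
\mathbb{E}\|\mathbf{q}_t^k(\xi_t^k)\|^2\le \mathbb{E}\|\mathbf{g}_t^k(\xi_t^k)\|^2\le G^2 .
\]
This is the single place where FedZMG enters; everything else is identical to FedAvg.

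First we localize in time. For any $t$ there is a synchronization instant $t_0\in\mathcal{I}_E\cup\{0\}$ with $t-t_0\le E-1$. At $t_0$ all local models coincide with $\bar{\mathbf{w}}_{t_0}$. Unrolling the local recursion for $\mathbf{v}$ and $\mathbf{w}$ therefore gives
\[
\mathbf{w}_t^k=\bar{\mathbf{w}}_{t_0}-\sum_{s=t_0}^{t-1}n_s\mathbf{q}_s^k(\xi_s^k),\qquad \bar{\mathbf{w}}_t=\bar{\mathbf{w}}_{t_0}-\sum_{s=t_0}^{t-1}n_s\mathbf{q}_s .
\]
Hence $\mathbf{w}_t^k-\bar{\mathbf{w}}_t=-\sum_{s=t_0}^{t-1}n_s\big(\mathbf{q}_s^k(\xi_s^k)-\mathbf{q}_s\big)$.

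Next we bound the weighted deviation. The quantity $\sum_kp_k\|\mathbf{x}_k-\bar{\mathbf{x}}\|^2$ is a weighted variance, so it is at most $\sum_kp_k\|\mathbf{x}_k\|^2$. Applying this with $\mathbf{x}_k=\bar{\mathbf{w}}_{t_0}-\mathbf{w}_t^k$ yields
\[
\mathbb{E}\sum_kp_k\|\mathbf{w}_t^k-\bar{\mathbf{w}}_t\|^2\le\sum_kp_k\,\mathbb{E}\Big\|\sum_{s=t_0}^{t-1}n_s\mathbf{q}_s^k(\xi_s^k)\Big\|^2 .
\]
By Cauchy--Schwarz over at most $E-1$ summands, the right-hand side is at most
\[
\sum_kp_k(E-1)\sum_{s=t_0}^{t-1}n_s^2\,\mathbb{E}\|\mathbf{q}_s^k(\xi_s^k)\|^2\le \sum_kp_k(E-1)\sum_{s=t_0}^{t-1}n_s^2G^2 .
\]

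Finally we handle the step sizes. Since $n_t$ is non-increasing, each $n_s$ with $s\ge t_0$ satisfies $n_s\le n_{t_0}$. Since $t\le t_0+E$, we also have $n_{t_0}\le 2n_{t_0+E}\le 2n_t$. Combining, $n_s^2\le 4n_t^2$, and summing over at most $E-1$ indices gives the claimed $4n_t^2(E-1)^2G^2$. The argument has no real obstacle. The only points to state carefully are that the projection never increases norms, so the FedAvg constant $G^2$ carries over unchanged, and the index bookkeeping at $t_0$, namely that $t-t_0\le E-1$ and that $n_{t_0}\le 2n_t$ follows from $n_t\le 2n_{t+E}$. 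We will also note that the outer norm in the statement is a typographical artifact: the intended quantity is $\mathbb{E}[\sum_kp_k\|\mathbf{w}_t^k-\bar{\mathbf{w}}_t\|^2]$.
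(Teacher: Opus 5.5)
Your proof is correct and follows essentially the same route as the paper, which also reruns the divergence argument of Li et al.\ with the projected gradients $\mathbf{q}_t^k(\xi_t^k)=\Phi\,\mathbf{g}_t^k(\xi_t^k)$ in place of the raw ones. The only difference is cosmetic. The paper keeps the exact identity $\mathbb{E}\|\Phi\mathbf{g}\|^2=\mathbb{E}\|\mathbf{g}\|^2-\frac{1}{d}\mathbb{E}[(\mathbf{1}^T\mathbf{g})^2]$ and carries the resulting nonpositive correction term through the computation before discarding it, whereas you drop that term immediately via non-expansiveness.
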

\begin{proof}
By bounding the divergence of $\mathbf{w}_t^k$ in a similar way to \cite{li2019convergence} the following is obtained
\begin{eqnarray}
\lefteqn{\mathbb{E}\left[ \sum_k p_k \| \mathbf{w}_t^k -\bar{\mathbf{w}}_t \|^2 \right]\le 4n_t^2 (E-1)^2 G^2} \nonumber \\
&&\hspace{-1cm}-\sum_k p_k (E-1) \sum_{j=t_0}^{t-1} n_j^2 \frac{1}{d} 
\mathbf{1}^T \mathbb{E}[\nabla F_k(\mathbf{w}_t^k, \xi_j^k) \nabla F_k(\mathbf{w}_t^k, \xi_j^k)^T] \mathbf{1} \nonumber
\end{eqnarray}
If the correlation matrix of the gradients  satisfies (uniformly)
\[
\mathbb{E}[\nabla F_k(\mathbf{w}_t^k,\xi_j^k) \nabla F_k(\mathbf{w}_t^k,\xi_j^k)^T] \succeq \lambda I
\]
then a sharper upper bound can be obtained for Lemma 3. However, this means that the gradients have at least $\lambda$ variance in every direction so that the randomness of 
$\nabla F_k(\mathbf{w}_j^k,\xi_j^k))$ excites all curvature directions. This condition is even stronger than the L-smoothness and $\mu$-strong convexity of the functions $F_k(\cdot)$. For the sake of a fair comparison  this condition will not be taken into further consideration.
\end{proof}

\subsubsection{THE MAIN THEOREM}

\begin{theorem}[Convergence of FedZMG]
Under Assumptions 1-4 for a non increasing $n_t$ of the form $n_t =\frac{\beta}{t+\gamma}$ with $\beta>0$ and $\gamma>0$,  FedZMG satisfies:
\begin{equation}
\lim_{T\rightarrow \infty}\mathbb{E}[F(\bar{\mathbf{w}}_T)] = F^{\ast}
\end{equation}
\end{theorem}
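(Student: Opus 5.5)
The plan follows the Li et al.\ FedAvg argument, with Lemmas 1--3 taking the place of their counterparts. Let $\Delta_t=\mathbb{E}\|\bar{\mathbf{w}}_t-\mathbf{w}^{\ast}\|^2$. Combining Lemma~\ref{lem:onestep} (dropping the last non-positive term), Lemma 2 and Lemma 3 gives the recursion
\begin{equation}
\Delta_{t+1}\le (1-n_t\mu)\Delta_t + n_t^2 B, \nonumber
\end{equation}
where
\begin{equation}
B=\sum_k p_k^2\left(\sigma_k^2-\tfrac{1}{d}\mathbf{1}^T\Sigma_g^k\mathbf{1}\right)+6L\Gamma+8(E-1)^2G^2. \nonumber
\end{equation}
Two details need care here. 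First, Lemma 3 bounds the divergence term by $4n_t^2(E-1)^2G^2$, so the factor $2$ in Lemma~\ref{lem:onestep} contributes $8(E-1)^2G^2$. Second, the hypotheses of Lemma~\ref{lem:onestep} must be met: the step-size condition $n_t\le \frac{1}{4L}$ and the initialization condition $\mathbf{1}^T(\bar{\mathbf{w}}_0-\mathbf{w}^\ast)=0$. The initialization condition is preserved for all $t$ because every update direction $\mathbf{q}_t^k$ lies in $\mathbf{1}^\perp$ and averaging is linear.

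Next, fix $\beta>1/\mu$ and $\gamma\ge\max\{8L/\mu,E\}$ so that $n_t=\beta/(t+\gamma)$ satisfies $n_t\le\frac{1}{4L}$ and $n_t\le 2n_{t+E}$, as Lemma 3 requires. With $v=\max\{\beta^2B/(\beta\mu-1),\,\gamma\Delta_0\}$, prove by induction that $\Delta_t\le v/(t+\gamma)$. The inductive step uses the standard calculation
\begin{equation}
\Delta_{t+1}\le \Big(1-\tfrac{\beta\mu}{t+\gamma}\Big)\tfrac{v}{t+\gamma}+\tfrac{\beta^2B}{(t+\gamma)^2}
=\tfrac{t+\gamma-1}{(t+\gamma)^2}v+\Big[\tfrac{\beta^2B-(\beta\mu-1)v}{(t+\gamma)^2}\Big]\le \tfrac{v}{t+\gamma+1}, \nonumber
\end{equation}
since the bracket is non-positive.

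Finally, $L$-smoothness of $F$ (a convex combination of $L$-smooth functions) together with $\nabla F(\mathbf{w}^\ast)=0$ gives
\begin{equation}
\mathbb{E}[F(\bar{\mathbf{w}}_T)]-F^\ast\le \tfrac{L}{2}\Delta_T\le \tfrac{L}{2}\,\tfrac{v}{T+\gamma}\to 0. \nonumber
\end{equation}
This proves the limit and gives an explicit $\mathcal{O}(1/T)$ rate, whose constant $B$ is smaller than FedAvg's by the amount $\frac1d\sum_k p_k^2\mathbf{1}^T\Sigma_g^k\mathbf{1}$.

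I expect the main obstacle to be the optimality reference point rather than the recursion itself. Because the iterates are confined to the affine set $\bar{\mathbf{w}}_0+\mathbf{1}^\perp$, the cross term in Lemma~\ref{lem:onestep} vanishes only if $\mathbf{w}^\ast$ lies in that same affine set. The strong-convexity step also implicitly uses the projected gradient. So I would first verify that $\mathbf{w}^\ast$, the unconstrained minimizer, satisfies $\mathbf{1}^T\nabla F(\mathbf{w}^\ast)=0$ trivially, since the gradient vanishes there. Given the assumed initialization, the constrained and unconstrained problems then share the minimizer, and the one-step inequality holds with $F^\ast$ as stated. If that identification fails, I would instead read $F^\ast$ as the minimum over the affine slice; $F$ restricted to that slice is still $\mu$-strongly convex and $L$-smooth, so the same argument goes through.
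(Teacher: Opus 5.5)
Your proposal is correct and follows the paper's proof essentially step for step. It uses the same recursion $\Delta_{t+1}\le(1-n_t\mu)\Delta_t+n_t^2C$ assembled from Lemmas 1--3 (same constant, including the $8(E-1)^2G^2$ term), the same induction $\Delta_t\le v/(t+\gamma)$ with $\beta>1/\mu$ and $\gamma\ge E$, and the same closing $L$-smoothness step. You are also more explicit than the paper about the initialization hypothesis $\mathbf{1}^T(\bar{\mathbf{w}}_0-\mathbf{w}^\ast)=0$, which the theorem as stated silently needs: without it the iterates stay on an affine slice that misses $\mathbf{w}^\ast$, so the limit cannot be $F^\ast$.

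One small slip: $\gamma\ge 8L/\mu$ gives $n_t\le\frac{1}{4L}$ only when $\beta\le 2/\mu$. Either fix $\beta=2/\mu$ as Li et al.\ do, or take $\gamma\ge\max\{4L\beta,E\}$.
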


\begin{proof}
Let $\Delta_t = \mathbb{E}\|\bar{\mathbf{w}}_t - \mathbf{w}^{\ast}\|^2$. Substituting the results of  Lemma 2 and 3 in Lemma 1, the following recursion is obtained
\begin{equation}
\Delta_{t+1}= (1-n_t\mu) \Delta_{t} + n_t^2 C
\end{equation}
where
\[
C=6L \Gamma + 8(E-1)^2 G^2 +
\sum_k p_k^2 
\left(\sigma_k^2 - \frac{1}{d} \mathbf{1}^T \Sigma_g^k \mathbf{1}\right)
\]
Using induction $\Delta_t$ can be bound as 
\begin{equation}
\Delta_t \le \frac{\delta}{\gamma+t}
\end{equation}
provided that $n_0=\min\{\frac{1}{\mu}, \frac{1}{4L}\}$, $\gamma > E$,  $\beta > \frac{1}{\mu}$
and $\delta >\max\{ \frac{\beta^2 C}{\beta\mu-1}, \gamma \|\bar{\mathbf{w}}_0-\mathbf{w}^\ast\|^2 \}$.
Then by the smoothness of $F(\cdot)$ it is implied that 
\[
\mathbb{E}[F(\bar{\mathbf{w}}_t)]-F^{\ast} \le \frac{L}{2} \Delta_t 
\]
which tends to zero as $t$ increases.
\end{proof}

\subsubsection{DISCUSSION}
The convergence rate of FedZMG takes the same form $\mathcal{O}(1/T)$ as standard FedAvg. However, the constant $C$ is strictly smaller than the FedAvg constant $B$ derived in \cite{li2019convergence}. Therefore, it is expected that FedZMG will converge faster than FedAvg. The experiments described in the next section confirm this
claim.

% #############################
\section{EXPERIMENTAL SETUP}
\label{sec:experimentalsetup}

In this section FedZMG is evaluated alongside established FL algorithms across  different datasets and learning objectives. Specifically, performance is benchmarked against FedAvg, a standard baseline, and FedAdam, a widely used adaptive method. Furthermore, this section explains the experimental methodology, covering the hardware specifications, selected datasets, ML models, and the hyperparameter tuning process. 
All experiments were executed on an Nvidia DGX V100 Workstation, using a single Tesla V100 DGXS 32GB GPU (NVIDIA Corporation, Santa Clara, CA, USA). The algorithms and models were implemented using the TensorFlow Federated (TFF) framework \cite{tff}. To guarantee a fair assessment, all algorithms were tested on identical ML tasks, and statistical analysis was performed to verify the significance of the results.

\subsection{DATASETS AND MODELS}
\label{subsec:datasetsandmodels}

For the character recognition task, a federated version of the EMNIST dataset is used \cite{cohen2017emnist}, which includes 62 distinct classes spread across 3,383 clients. The data partitioning strategy splits examples rather than clients. As a result, all 3,383 clients appear in both the training set, containing 671,585 examples, and the testing set, which has 77,483 examples. A Convolutional Neural Network (CNN) is used for this dataset. The architecture includes two 5×5 convolutional layers: the first has 32 channels and the second has 64. Each layer is followed by a 2×2 max-pooling layer. After the convolutional stages, a fully connected layer with 512 units and ReLU activation is included, leading to a softmax output layer. The model has a total of 1,690,046 trainable parameters.

For the task of image classification the federated CIFAR100 dataset was selected \cite{krizhevsky2009learning}, containing 60,000 color images (32×32 pixels) across 100 classes. In the federated setup, the dataset splits into 500 training clients and 100 testing clients, with each client holding 100 images. The corresponding CNN architecture includes two main convolutional blocks and a dense classification head. The first block has two sequential 3×3 convolutional layers with 32 channels and ReLU, followed by a 2×2 max-pooling layer and a dropout layer with a rate of 0.25. The second block has a similar structure but adds depth, featuring two 3×3 convolutional layers with 64 channels and ReLU activation, followed by max-pooling and dropout layers. The feature maps are flattened and processed by a fully connected layer with 512 units and ReLU, followed by a dropout layer with a rate of 0.5. The final dense output layer has 100 units to match the CIFAR100 classes. This model has 2,214,532 parameters.

Next-Character-Prediction (NCP) is tested using the federated Shakespeare dataset \cite{caldas2018leaf}, which encompasses 715 clients. Data splitting occurs at the example level. Therefore, all 715 clients are included in both the training set, which contains 16,068 examples, and the testing set, which has 2,356 examples. This dataset supports the training of a Recurrent Neural Network (RNN). Variable-length word sequences are processed by an embedding layer that maps tokens to 256-dimensional vectors. These vectors are then input into a Gated Recurrent Unit (GRU) layer with 1,024 units, which outputs 1,024-dimensional vectors at each step. Finally, a dense layer with 66 nodes corresponds to the vocabulary size. This architecture requires 4,022,850 trainable parameters.

\subsubsection*{NON-IID DATA DISTRIBUTION ANALYSIS}
To quantify statistical heterogeneity, the local data of randomly sampled clients ($n=10$) is analyzed, from EMNIST, CIFAR100, and Shakespeare. Five metrics are used: Volume (total sample count), Label Diversity (unique classes/characters), Normalized Entropy (distribution diversity), Gini Coefficient (class imbalance), and KL Divergence (deviation from global distribution). Table \ref{tab:non_iid_summary} presents the global summary statistics. CIFAR-100 exhibits the most severe statistical heterogeneity (highest KL Divergence: $1.57$), while EMNIST displays the highest variance in sample volume ($\sigma=92.1$). Table \ref{tab:client_samples} details specific client profiles (top three most distinct), highlighting the range of local distributions, from highly balanced clients to those dominated by a single class.

\begin{table}[ht]
\centering
\caption{Summary of Non-IID Metrics (Mean $\pm$ Std Dev). CIFAR100 shows the highest deviation from the global distribution (KL Div), while Shakespeare has the largest vocabulary.}
\label{tab:non_iid_summary}
\resizebox{\columnwidth}{!}{
\begin{tabular}{lccccc}
\toprule
\textbf{Dataset} & \textbf{Volume} & \textbf{Label Diversity} & \textbf{Entropy} & \textbf{Gini} & \textbf{KL Div.} \\
\midrule
EMNIST     & $236 \pm 92$  & $55 \pm 4$   & $0.84 \pm 0.03$ & $0.58 \pm 0.03$ & $0.24 \pm 0.09$ \\
CIFAR-100  & $100 \pm 0$   & $21 \pm 5$   & $0.56 \pm 0.07$ & $0.89 \pm 0.04$ & $1.57 \pm 0.28$ \\
Shakespeare& $3267 \pm 4030$ & $49 \pm 12$ & $0.80 \pm 0.04$ & --- & --- \\
\bottomrule
\end{tabular}
}
\end{table}

\begin{table}[ht]
\centering
\caption{Detailed metrics for representative clients. ``Dom. Class \%'' indicates the percentage of local data belonging to the most frequent class/character.}
\label{tab:client_samples}
\resizebox{\columnwidth}{!}{
\begin{tabular}{lrrrrr}
\toprule
\textbf{Client ID} & \textbf{Volume} & \textbf{Label Diversity} & \textbf{Entropy} & \textbf{KL Div.} & \textbf{Dom. Class. \%} \\
\midrule
\multicolumn{6}{l}{\textbf{\textit{EMNIST (High Volume Variance)}}} \\
f0698\_45 & 374 & 60 & 0.87 & 0.13 & 10.2\% \\
f0952\_49 & 328 & 58 & 0.88 & 0.31 & 6.7\% \\
f2295\_67 & 131 & 49 & 0.82 & 0.30 & 9.9\% \\
\midrule
\multicolumn{6}{l}{\textbf{\textit{CIFAR100 (High Class Imbalance)}}} \\
Client 54 & 100 & 29 & 0.68 & 0.98 & 10.0\% \\
Client 423& 100 & 18 & 0.56 & 1.28 & 15.0\% \\
Client 435& 100 & 18 & 0.43 & 1.73 & 48.0\% \\
\midrule
\multicolumn{6}{l}{\textbf{\textit{Shakespeare (High Sample Variance)}}} \\
K\_LEAR... & 11993 & 63 & 0.76 & --- & 17.8\% \\
K\_HENRY... & 5965 & 63 & 0.76 & --- & 20.0\% \\
ALL\_S...   & 245 & 38 & 0.78 & --- & 32.2\% \\
\bottomrule
\end{tabular}
}
\end{table}

\subsection{TRAINING SETUP AND HYPERPARAMETER TUNING}
\label{subsec:trainingsetup}

\subsubsection{LEARNING RATES}
All three federated algorithms under test have a set of two LRs that need to be tuned before the actual experiment execution. The client optimizer’s LR ($\eta_c$) and the server optimizer’s LR ($\eta_s$). A grid search on a 9 $\times$ 9 grid was performed to find the best client--server LR  pair for every dataset \cite{wang2021field}. LR values are logarithmically spaced between $10^{-3}$ and $10^1$. To maximize performance across diverse scenarios, each algorithm went through independent fine-tuning for every learning task. A grid search was conducted by training for 50 rounds using distinct LR pairs, with the optimal configuration selected based on the average validation accuracy of the final 10 rounds. During tuning, the cohort size was fixed at 10 and the epoch size at 4. Visualizations of the grid search results are provided as heat-maps in Figure \ref{fig:all_gridsearches}.
\begin{figure*}[!t]
    \centering
    \includegraphics[width=1.0\textwidth]{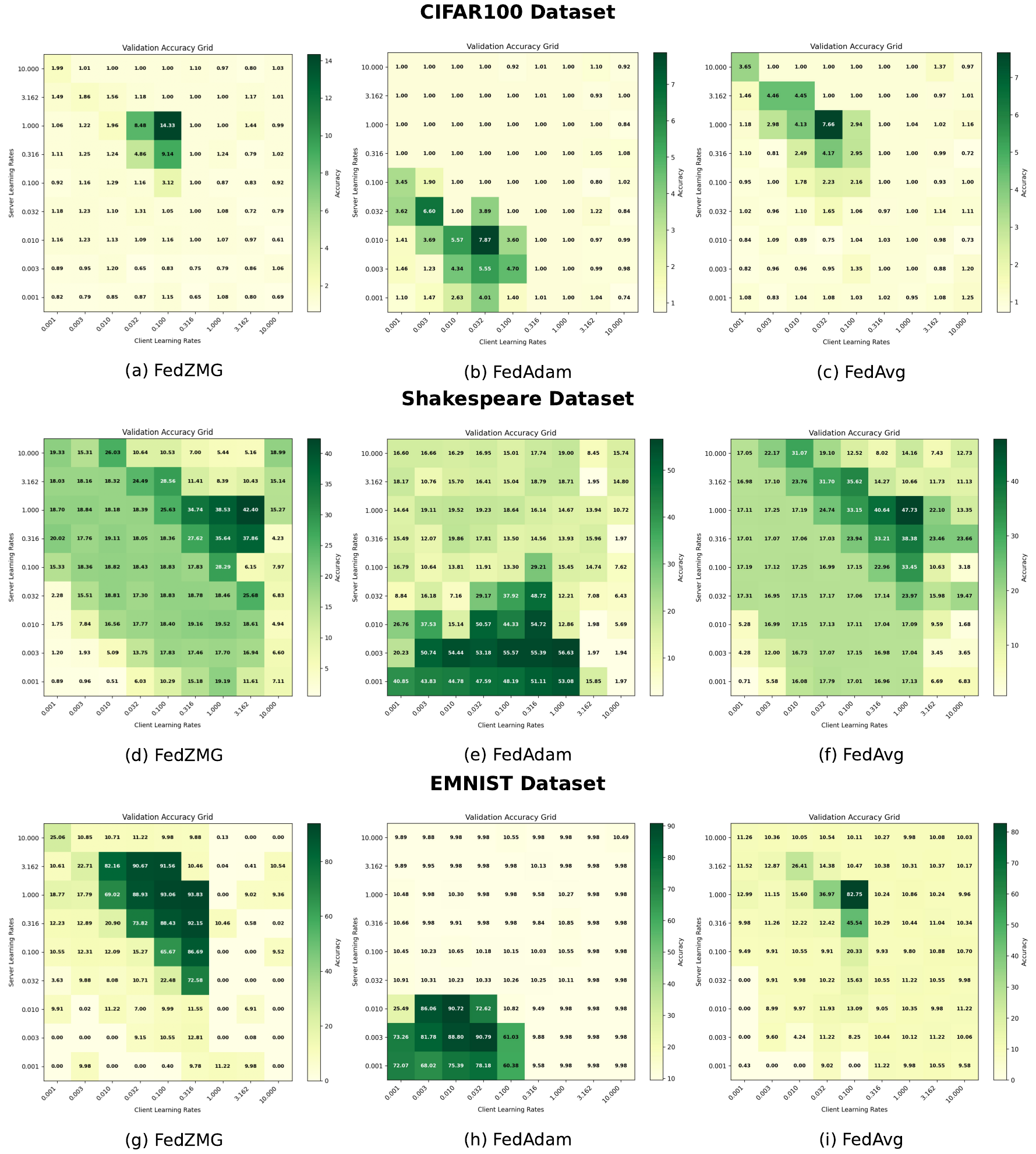}
    
    \caption{Client--server ($\eta_c-\eta_s$) learning rate tuning via grid search across three datasets: CIFAR100 (top), Shakespeare (middle), and EMNIST (bottom).}
    \label{fig:all_gridsearches}
\end{figure*}

\subsubsection{OTHER HYPERPARAMETERS}

FedZMG uses momentum and decoupled weight decay. The momentum was set to $m=0.9$ and weight decay $\lambda=0.0005$ based on empirical results. For FedAdam $\beta_1,\beta_2$, and $\epsilon$ were fixed to $\beta_1=0.9, \beta_2=0.99, \epsilon=0.001$ throughout the experiments \cite{reddi2020adaptive}. The number of local training rounds for every client was set to $epochs\ E=4$. Additionally, for the algorithm comparison experiments, the number of clients sampled on every round was set to $cohort\ size\ C=10$.

For the server aggregation method, a weighted average was used, weighting by the number of examples, for all algorithms. Finally, the training process was iterated over 1000 rounds for all the executed tests on all datasets.

\subsection{EVALUATION METRICS}
\label{subsec:evaluationmetrics}
Three distinct metrics based on validation accuracy are employed to ensure a comprehensive evaluation. First, ``Final Accuracy'', the average accuracy over the final 100 rounds, is used to quantify end-state performance. Second, convergence velocity is measured by the ``Round to Threshold'', defined as the number of rounds required for a 4-round moving average to consistently surpass specific targets (EMNIST: 75\%/85\%; CIFAR100: 20\%/33\%; Shakespeare: 35\%/45\%). Third, ``Late-stage Sustained Performance'' is computed by averaging accuracy after the slowest algorithm reaches the threshold. This approach isolates post-convergence stability while mitigating interval-length bias. To assess statistical significance, paired $t$-tests are conducted on results from five independent runs using identical data splits. Assuming a null hypothesis $H_0: \mu_D = 0$, the test statistic is calculated as:

\begin{equation}
t = \frac{\overline{D}}{s_D / \sqrt{n}}
\end{equation}

where $\overline{D}$ denotes the mean of the paired differences, $s_D$ the standard deviation, and $n$ the number of pairs. Significance is determined at the $\alpha = 0.05$ level.

\section{RESULTS}
\label{sec:results}

In this section, the results of the executed tests are presented and described in detail.

\subsection{ALGORITHMS COMPARISON}

Figure \ref{fig:algorithms_accuracy_comparison} demonstrates three plots with the validation accuracy of the three algorithms in 1000 communication rounds for every dataset. Five runs were performed per algorithm. The main curve represents the mean validation accuracy and the shaded areas show the standard deviation.

\begin{figure}[!t]
\centering

\subfloat[CIFAR100]{
\includegraphics[width=\linewidth]{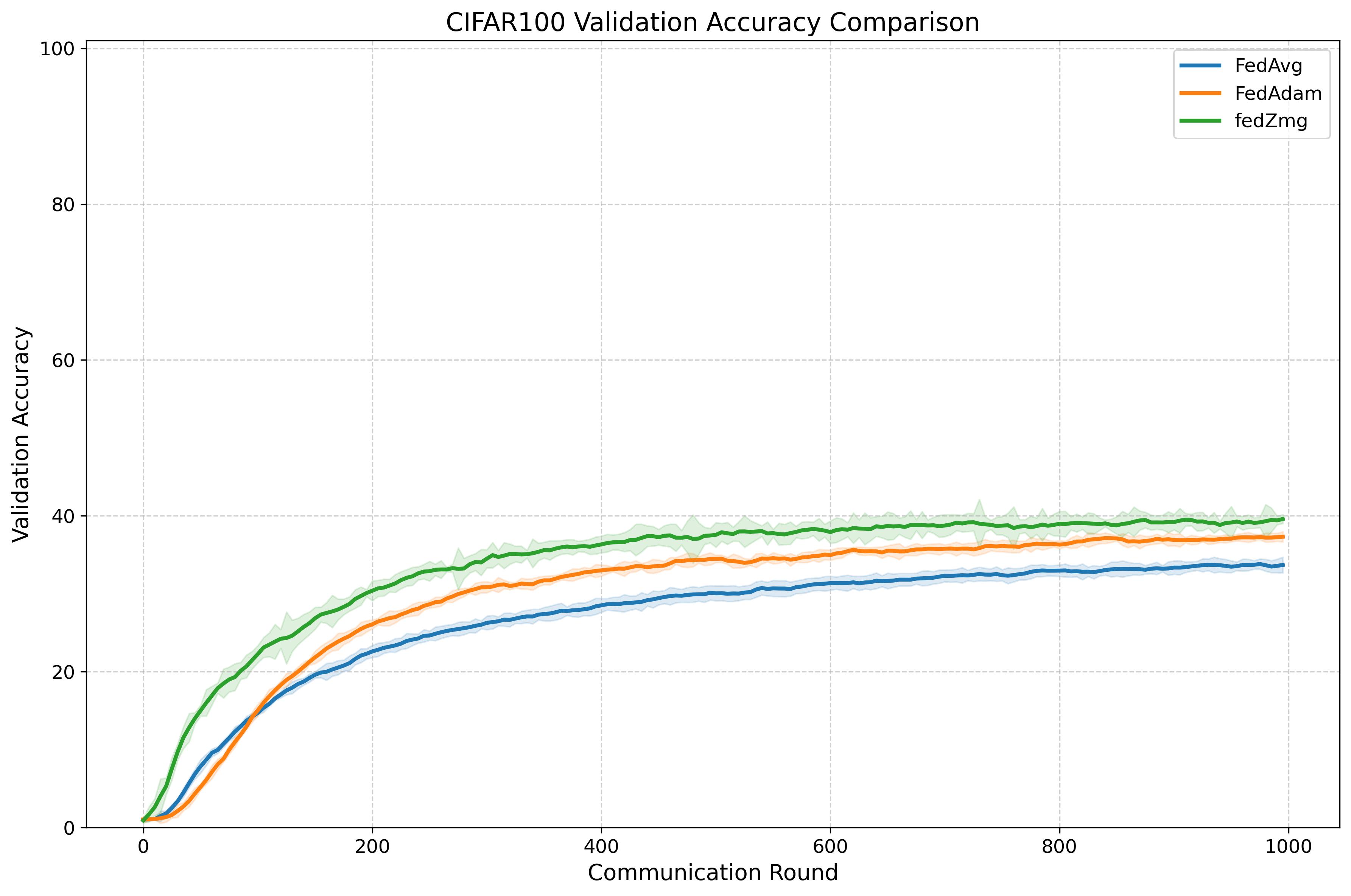}
}

\subfloat[Shakespeare]{
 \includegraphics[width=\linewidth]{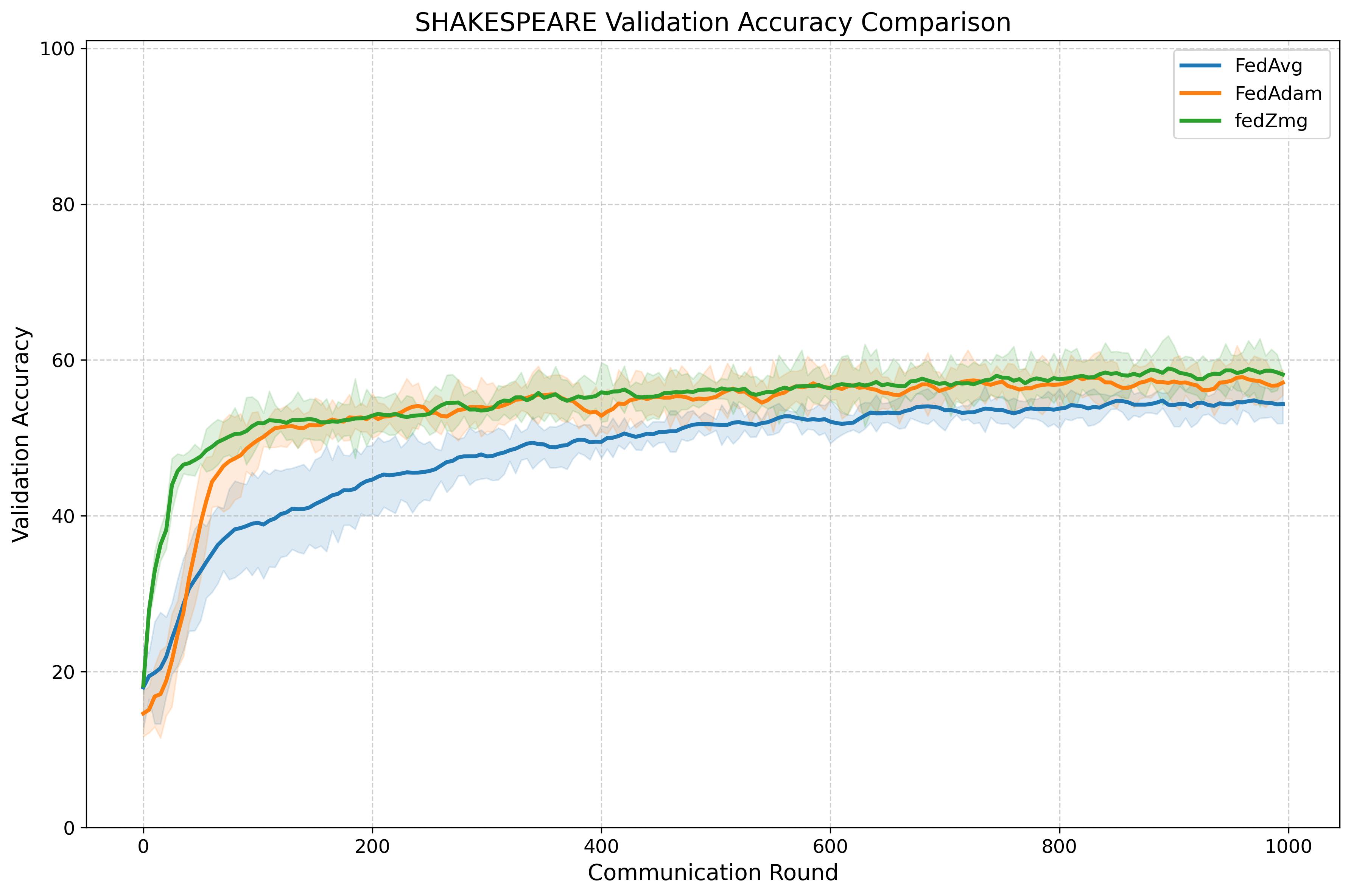}
}

\subfloat[EMNIST]{
\includegraphics[width=\linewidth]{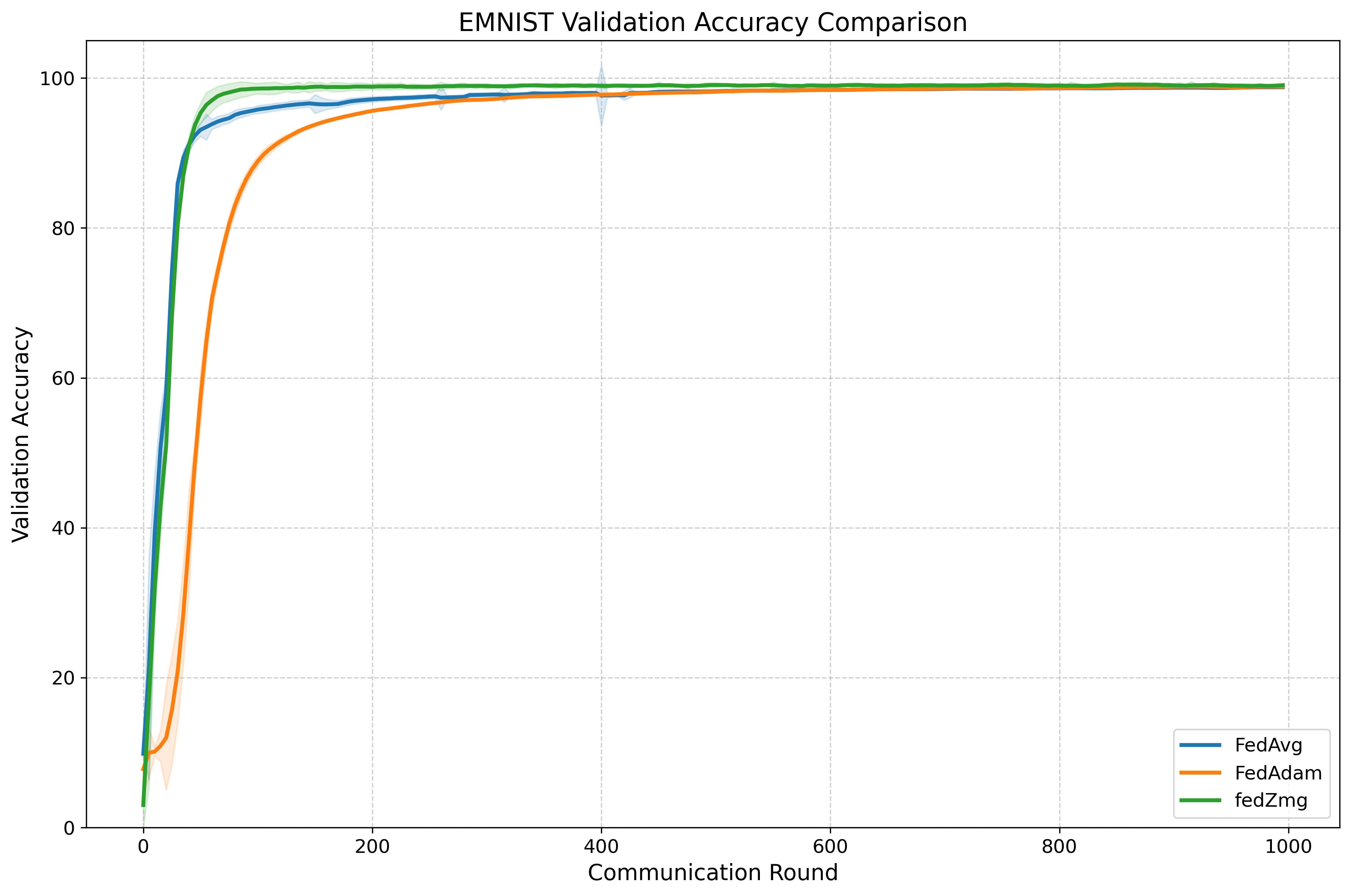}
 }

 \caption{Validation accuracy comparison. Top: CIFAR100; Middle: Shakespeare; Bottom: EMNIST.}
 \label{fig:algorithms_accuracy_comparison}
\end{figure}

Tables \ref{tab:cifar100}--\ref{tab:emnist} are summary tables per dataset, presenting the results of the selected evaluation metrics. The ``Rounds to Threshold'' columns show how many rounds it takes for each algorithm to reach the specified threshold. This offers valuable input about early convergence at two different stages. The column, ``Avg Acc post-Threshold'' presents the average validation accuracy of each algorithm after every algorithm reached the specified threshold. This helps with the interpretation of the algorithm's stability after reaching a satisfactory accuracy percentage.

\begin{table}[h]
\caption{Metrics comparison on CIFAR100 dataset.\label{tab:cifar100}}
\begin{tabularx}{\linewidth}{lCCC}
\toprule
\textbf{Algorithm} & \textbf{Rounds to} \newline \textbf{20.0\%} & \textbf{Rounds to} \newline \textbf{33.0\%} & \textbf{Avg Acc} \newline \textbf{Post-33.0\%} \\
\midrule
FedZMG & 95 & 250 & 39.50\% \\
FedAvg & 155 & 805 & 34.69\% \\
FedAdam & 135 & 420 & 37.68\% \\
\bottomrule
\end{tabularx}
\end{table}
\vspace{-1em}

\begin{table}[h]
\caption{Metrics comparison on Shakespeare dataset.\label{tab:shakespeare}}
\begin{tabularx}{\linewidth}{lCCC}
\toprule
\textbf{Algorithm} & \textbf{Rounds to} \newline \textbf{20.0\%} & \textbf{Rounds to} \newline \textbf{33.0\%} & \textbf{Avg Acc} \newline \textbf{Post-33.0\%} \\
\midrule
FedZMG & 15 & 15 & 55.79\% \\
FedAvg & 15 & 55 & 51.92\% \\
FedAdam & 30 & 45 & 56.62\% \\
\bottomrule
\end{tabularx}
\end{table}

\begin{table}[h]
\caption{Metrics comparison on EMNIST dataset.\label{tab:emnist}}
\begin{tabularx}{\linewidth}{lCCC}
\toprule
\textbf{Algorithm} & \textbf{Rounds to} \newline \textbf{20.0\%} & \textbf{Rounds to} \newline \textbf{33.0\%} & \textbf{Avg Acc} \newline \textbf{Post-33.0\%} \\
\midrule
FedZMG & 15 & 20 & 98.91\% \\
FedAvg & 15 & 20 & 98.43\% \\
FedAdam & 35 & 40 & 97.96\% \\
\bottomrule
\end{tabularx}
\end{table}

\subsection{STATISTICAL SIGNIFICANCE TESTS}

In this subsection, the results of the paired $t$-test are presented in order to evaluate the importance of the aforementioned results. For every dataset, the two pairs under test are FedZMG vs. FedAdam and FedZMG vs. FedAvg. The results are concentrated on Tables \ref{tab:emnist_ttest}--\ref{tab:cifar100_ttest}.

\begin{table}[h]
\caption{Paired $t$-test results on EMNIST dataset.\label{tab:emnist_ttest}}
\vspace{0.5em}
\begin{tabularx}{\linewidth}{l>{\centering\arraybackslash}X>{\centering\arraybackslash}X}
\toprule
\textbf{Comparison} & \textbf{\emph{t}-Statistic} & \textbf{\emph{p}-Value} \\
\midrule
FedZMG vs. FedAdam & 2.043 & $1.105 \times 10^{-1}$ \\
FedZMG vs. FedAvg & 4.025 & $1.579 \times 10^{-2}$ \\
\bottomrule
\end{tabularx}
\end{table}

\begin{table}[h]
\caption{Paired $t$-test results on Shakespeare dataset.\label{tab:shakespeare_ttest}}
\vspace{0.5em}
\begin{tabularx}{\linewidth}{l>{\centering\arraybackslash}X>{\centering\arraybackslash}X}
\toprule
\textbf{Comparison} & \textbf{\emph{t}-Statistic} & \textbf{\emph{p}-Value} \\
\midrule
FedZMG vs. FedAdam & 1.452 & $2.201 \times 10^{-1}$ \\
FedZMG vs. FedAvg & 3.854 & $1.824 \times 10^{-2}$ \\
\bottomrule
\end{tabularx}
\end{table}

\begin{table}[h]
\caption{Paired $t$-test results on CIFAR100 dataset.\label{tab:cifar100_ttest}}
\vspace{0.5em}
\begin{tabularx}{\linewidth}{l>{\centering\arraybackslash}X>{\centering\arraybackslash}X}
\toprule
\textbf{Comparison} & \textbf{\emph{t}-Statistic} & \textbf{\emph{p}-Value} \\
\midrule
FedZMG vs. FedAdam & 7.804 & $1.455 \times 10^{-3}$ \\
FedZMG vs. FedAvg & 20.716 & $3.208 \times 10^{-5}$ \\
\bottomrule
\end{tabularx}
\end{table}

%%%%%%%%%%%%%%%%%%%%%%%%%%%%%%%%%%%%%%%%%%
\section{DISCUSSION}
\label{sec:discussion}

\subsection{ANALYZING THE RESULTS}

Analyzing the empirical results of FedZMG across different learning tasks, it is apparent that FedZMG demonstrates a consistent improvement in both convergence speed and final model validation accuracy, compared to the baseline FedAvg. This advantage is more evident in scenarios with high statistical heterogeneity, such as the CIFRAR100 dataset. As presented in Table \ref{tab:non_iid_summary}, CIFAR100 demonstrates the highest KL Divergence ($1.57 \pm 0.28$), indicating high non-IID data. In this scenario, FedZMG reached the 33\% accuracy threshold in 250 rounds, more than three times faster than FedAvg with 805 rounds and significantly faster than FedAdam with 420 rounds.

This improvement can be explained by the theoretical properties of FedZMG derived the theoretical analysis section. Many FL algorithms are sensitive to ``intensity'' or ``bias'' shifts in local data distributions, which can be translated to large mean components in the local gradient vectors. By applying the ZMG operator, FedZMG filters out these client-specific biases. FedZMG operates in a constrained optimization space with a strictly smaller variance constant. As a result, the local updates are closer to the global optimum, reducing client-drift and allowing a more aggressive LR without destabilizing the global model.

It is worth mentioning that FedZMG outperformed FedAdam in the complex CIFAR100 task, although FedAdam is an adaptive optimizer correcting noisy gradients on the server side using historical moment estimations. FedZMG on the other hand, address the source of the noise directly on the client side. By structurally regularizing the gradients via centering before they are communicated to the aggregation server. This suggests that structural regularization of the gradient space and historical adaptation of the LR handle heterogeneity differently. However, FedZMG achieves this robustness without the communication overhead of transmitting auxiliary momentum vectors or the need to fine-tune additional hyperparameters like $\beta_1$ and $\beta_2$, which are mandatory for FedAdam.

FedZMG shows a robust behavior on both computer vision tasks and Natural Language Processing (NLP). In the Shakespeare dataset, FedZMG achieved a competitive post-threshold accuracy of 55.79\%. In the context of embedding layers and GRUs, the ``mean shift'' neutralized by ZMG can be interpreted as a form of instance-specific normalization. Similarly to how Batch Normalization (BN) reduces internal covariate shift in centralized training, the ZMG operator seemingly prevents local embedding updates from being dominated by high-frequency tokens or varying sentence lengths specific to a individual client. This allows the optimization process to focus on the structural features of the text, leading to better generalization across the heterogeneous corpus of distinct speaking roles in the Shakespeare dataset.

\subsection{STATISTICAL SIGNIFICANCE}

The statistical significance tests further support the experimental results of the previous sections, specifically regarding the improvement over the baseline FedAvg. The paired t-tests presented in Tables \ref{tab:emnist_ttest}-\ref{tab:cifar100_ttest} reject the null hypothesis for the FedZMG vs. FedAvg comparison across all datasets ($p < 0.05$). Specifically, for the CIFAR100 dataset, the p-value for the comparison between FedZMG and FedAvg is $3.208 \times 10^{-5}$, indicating a high level of statistical significance.

This statistical validation reinforces the theoretical convergence bounds established in Section \ref{subsec:convergenceanalysis}. The convergence rate of FL is heavily determined by the variance terms in the error bound. The current theoretical analysis showed that the constant factor $B_{ZMG}$ controlling the error bound is proportional to the reduced heterogeneity $G_{ZMG}^2$. The statistically significant margins in validation accuracy, especially in the early and middle stages of training (as seen in the ``Rounds to Threshold'' metrics), provide empirical validation that $B_{ZMG}$ is indeed significantly smaller than $B_{FedAvg}$ in practice. In the EMNIST and Shakespeare datasets, while the distinction from the adaptive FedAdam was less statistically pronounced ($p > 0.05$), FedZMG still maintained a statistically significant advantage over the standard FedAvg ($p < 0.05$), confirming that the reduction in gradient variance provides a consistent benefit over standard aggregation methods even in varied learning tasks.

\subsection{LIMITATIONS AND FUTURE WORK}

FedZMG shows a robust performance, supporting its theoretical guarantees. However, there are still some design limitations that would benefit from an extended investigation. The main idea behind the algorithm is that the mean gradient can be considered a bias or noise and not necessarily a learning signal. Although the claim that global features such as variance have high priority for many classification and recognition tasks, there can be other regression problems or specific model architectures where the global intensity or the mean gradient value hold valuable information as well. In these corner cases, projecting the gradient onto a zero-mean hyperplane could potentially slow down learning process. Additionally, the interaction between the ZMG operator and other normalization layers, such as BN presents a complex dynamic. As activations are also centered by BN, application of ZMG translates to essentially repeating a similar operation to the gradients. Therefore, extremely deep networks could see diminishing returns because the structural constraints can become too redundant. Furthermore, the experiments conducted show the robust advantages of FedZMG over baseline algorithms such as FedAvg and FedAdam, however it would be beneficial to investigate FedZMG's performance against other client-side optimization algorithms.
To address these limitations, future work could focus on extended experiments, comparing FedZMG with other client-side algorithms like SCAFFOLD or FedCAda over more learning tasks while using deeper and more complex model architectures. Additionally, it would be interesting to investigate the performance of hybrid optimization algorithms combining FedZMG's client-side benefits with an adaptive optimization technique on the server-side. A ``FedZMG-Adam'' algorithm might be able benefit from the variance reduction in ZMG to make Adam more stable.

\section{CONCLUSION}
\label{sec:conclusion}

In this paper, FedZMG was introduced. A parameter-free client-side FL algorithm designed to mitigate client-drift without introducing additional communication overhead. By projecting local updates onto a zero-mean hyperplane, FedZMG structurally reduces gradient heterogeneity, offering a theoretical guarantee for improved convergence in non-IID scenarios. Extensive experiments on EMNIST, CIFAR100, and Shakespeare datasets demonstrate that FedZMG consistently outperforms FedAvg and FedAdam in convergence speed and validation accuracy, particularly under conditions of high statistical heterogeneity. Future research will investigate the algorithm's interaction with deep network normalization layers and explore hybrid architectures that combine FedZMG with server-side adaptive optimization.

\section*{APPENDIX}
Appendixes, if needed, appear before the acknowledgment.

\section*{ACKNOWLEDGMENT}
The preferred spelling of the word ``acknowledgment'' in
American English is without an ``e'' after the ``g.'' Use the
singular heading even if you have many acknowledgments.
Avoid expressions such as ``One of us (S.B.A.) would like
to thank . . . .'' Instead, write ``F. A. Author thanks . . . .'' In
most cases, sponsor and financial support acknowledgments
are placed in the unnumbered footnote on the first page, not
here.

\bibliographystyle{IEEEtran}
\bibliography{bibliography}

\begin{IEEEbiography}[{\includegraphics[width=1in,height=1.25in,clip,keepaspectratio]{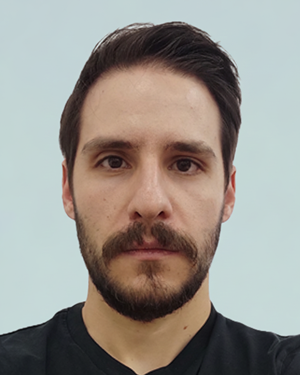}}]{Fotios Zantalis}{\,} was born in Athens, Greece in 1991. He received the BSc in Electronics Engineering and an MSc in Advanced Network Technologies and Computing from the University of West Attica, in 2014 and 2018, respectively. He is currently studying for the Ph.D. degree at the Department of Electrical and Electronic Engineering, University of West Attica. His current research interests are Federated Learning, Machine Learning, Internet of Things, and the Blockchain Technology. Contact him at fzantalis@uniwa.gr
\end{IEEEbiography}
%\vspace{-1cm}

\begin{IEEEbiography}[{\includegraphics[width=1in,height=1.25in,clip,keepaspectratio]{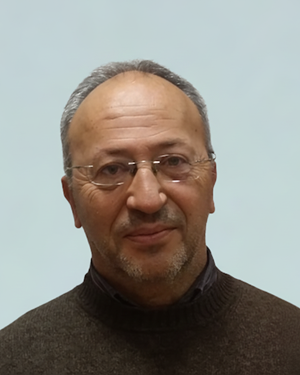}}]{Evangelos Zervas}{\,} received the Diploma in Electrical Engineering from the National Technical University of Athens, Greece, in 1987. He obtained the M.Sc. and Ph.D. degrees in 1989 and 1993, respectively, both in Electrical and Computer Engineering, from the Northeastern University, Boston, MA, USA. During the period January 1994 – July 1996, he served as a teaching assistant at the University of Athens, Dept. of Informatics and Telecommunications. In 1995, he joined the Faculty of the Technological Education Institute of Athens, Greece, and since 2018 he is a Professor at the Department of Electrical and Electronics Engineering, University of West Attica, Greece. Contact him at ezervas@uniwa.gr
\end{IEEEbiography}
%\vspace{-1cm}

\begin{IEEEbiography}[{\includegraphics[width=1in,height=1.25in,clip,keepaspectratio]{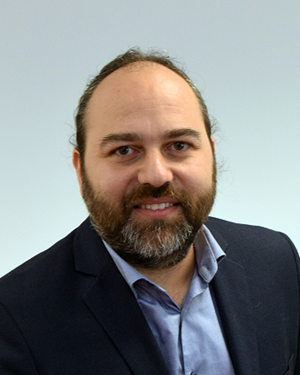}}]{Grigorios E. Koulouras}{\,} was born in Athens, Greece, in 1978. He received the B.Sc. degree in Electrical and Electronics Engineering from the Technological Educational Institute of Athens, Greece, in 1999, and the M.Sc. degree in Data Communication Systems from Brunel University, U.K., in 2001. He received the Ph.D. degree from the School of Engineering and Design, Brunel University, in 2008. He is currently an Associate Professor of Cloud Computing and Internet of Things with the Department of Electrical and Electronics Engineering, University of West Attica, Athens, Greece. From January 2018 to January 2022, he served as a Member of the Plenary of the Hellenic Telecommunications and Post Commission (EETT). He is also the Director and a co-founder of the “Telecommunications, Signal Processing and Intelligent Systems Research Laboratory (TelSiP)”. He has authored or co-authored more than 70 publications in peer-reviewed journals and conferences. His research interests include embedded systems, IoT architectures, and Federated Learning. Contact him at gregkoul@uniwa.gr
\end{IEEEbiography}

\vfill\pagebreak

\end{document}